\long\def\appRef#1{Appendix~\ref{#1}}
\long\def\techrepA#1{}
\def\fakesub#1{\paragraph{#1}}
\DeclareMathOperator{\op}{\oplus}
\DeclareMathOperator{\xorOp}{\times}
\DeclareMathOperator{\sequenceOp}{\rightarrow}
\DeclareMathOperator{\loopOp}{\circlearrowleft}
\DeclareMathOperator{\concurrentOp}{\land}
\DeclareMathOperator{\interleavedOp}{\leftrightarrow}
\DeclareMathOperator{\orOp}{\lor}
\newcommand{\lang}[1]{\mathcal{L}(#1)}
	\newcommand{\ruleSin}{\textsc{S}}
	\newcommand{\ruleAssXor}{\textsc{A}\ensuremath{_{\xorOp}}}
	\newcommand{\ruleAssSeq}{\textsc{A}\ensuremath{_{\sequenceOp}}}
	\newcommand{\ruleAssCon}{\textsc{A}\ensuremath{_{\concurrentOp}}}
	\newcommand{\ruleAssOr}{\textsc{A}\ensuremath{_{\orOp}}}
	\newcommand{\ruleAssLoopB}{\textsc{A}\ensuremath{_{\loopOp\textsc{b}}}}
	\newcommand{\ruleAssLoopR}{\textsc{A}\ensuremath{_{\loopOp\textsc{r}}}}
	\newcommand{\ruleTauXor}{\textsc{T}\ensuremath{_{\xorOp}}}
	\newcommand{\ruleTauSeq}{\textsc{T}\ensuremath{_{\sequenceOp}}}
	\newcommand{\ruleTauCon}{\textsc{T}\ensuremath{_{\concurrentOp}}}
	\newcommand{\ruleTauInt}{\textsc{T}\ensuremath{_{\interleavedOp}}}
	\newcommand{\ruleTauOr}{\textsc{T}\ensuremath{_{\orOp}}}
	\newcommand{\ruleTauOrXor}{\textsc{T}\ensuremath{_{\orOp,\xorOp}}}
	\newcommand{\ruleTauLoopB}{\textsc{T}\ensuremath{_{\loopOp\textsc{b}}}}
	\newcommand{\ruleTauLoopR}{\textsc{T}\ensuremath{_{\loopOp\textsc{r}}}}
	\newcommand{\ruleTauLoopBR}{\textsc{T}\ensuremath{_{\loopOp\textsc{br}}}}
	\newcommand{\ruleConInt}{\textsc{C}\ensuremath{_{\interleavedOp}}}
	\newcommand{\ruleConOr}{\textsc{C}\ensuremath{_{\orOp}}}
\newcommand{\inlineTree}[2]{\tikz[baseline=(root.base),level distance=0.7cm, sibling distance=0.1cm, inner sep=0.01cm]{%
		\tikzset{execute at begin node=\strut}%
		\Tree%
		[.\node (root) {#1}; #2 ]}} 
        \newcounter{labelledEnumerateCounter}
            \def\storelength#1#2#3{%
                \immediate\write\@auxout{\string\global\string\def\string\oldLabelledEnumerateList#3{#2}}%
            }
            \def\getlength#1#2{
                \ifcsname #1\roman{#2}\endcsname%
                    \csname #1\roman{#2}\endcsname%
                \else%
                    0pt%
                \fi%
            }
            \def\getWidth#1{
                \ifcsname oldLabelledEnumerateWidth#1\endcsname%
                    \csname oldLabelledEnumerateWidth#1\endcsname%
                \else%
                    1pt%
                \fi%
            }
            \def\setWidth#1#2{
                \global\expandafter\edef\csname oldLabelledEnumerateWidth#1\endcsname{\the#2}%
            }
        \newlength{\widthNow}
        \newlength{\widthThis}
        \NewDocumentEnvironment{labelledEnumerate}{}{%
            \stepcounter{labelledEnumerateCounter}
            \begingroup

            \edef\localCounter{\roman{labelledEnumerateCounter}}

            \begin{enumerate}[style=standard,align=left,labelwidth=\getlength{oldLabelledEnumerateList}{labelledEnumerateCounter}]%

            \let\oldMakeLabel\makelabel
            \let\makelabel\mymakelabel
        }{%
            \end{enumerate}%
            \storelength{\oldLabelledEnumerateStack}{\getWidth{\localCounter}}{\localCounter}%
            \endgroup
        }
        \newcounter{namedEnumerateCounter}
        \NewDocumentEnvironment{namedEnumerate}{mmm}{%
            \stepcounter{labelledEnumerateCounter}
            \begingroup

            \edef\localCounter{\roman{labelledEnumerateCounter}}

            \index{{#1}.1-#1.\ref{last:\thenamedEnumerateCounter}!#3}%
            \begin{enumerate}[label=#1.\arabic*,ref=\mbox{#2#1.\arabic*},labelindent=0pt,labelwidth=\getlength{oldLabelledEnumerateList}{labelledEnumerateCounter},itemindent=0em,leftmargin=!]%

            \let\oldMakeLabel\makelabel
            \let\makelabel\mymakelabel
        }{%
            \edef\@currentlabel{\arabic{\@enumctr}}%
            \label{last:\thenamedEnumerateCounter}%
            \stepcounter{namedEnumerateCounter}%

            \end{enumerate}%
            \storelength{\oldLabelledEnumerateStack}{\getWidth{\localCounter}}{\localCounter}%
            \endgroup
        }
\title{Language-Preserving Reduction Rules for Block-Structured Workflow Nets}
\titlerunning{Reduction Rules for Block-Structured Workflow Nets}
\author{Sander J.J. Leemans}
\institute{RWTH University, Aachen, Germany\\sander.leemans@rwth-aachen.de}
\begin{document}

\setlength{\abovedisplayskip}{0pt}
\setlength{\belowdisplayskip}{0pt}
\setlength{\abovedisplayshortskip}{0pt}
\setlength{\belowdisplayshortskip}{0pt}
\setlength{\itemsep}{0pt}

	\maketitle

	\begin{abstract}
		Process models are used by human analysts to model and analyse behaviour, and by machines to verify properties such as soundness, liveness or other reachability properties, and to compare their expressed behaviour with recorded behaviour within business processes of organisations.
		For both human and machine use, small models are preferable over large and complex models: for ease of human understanding and to reduce the time spent by machines in state space explorations.
		Reduction rules that preserve the behaviour of models have been defined for Petri nets, however in this paper we show that a subclass of Petri nets returned by process discovery techniques, that is, block-structured workflow nets, can be further reduced by considering their block structure in process trees.
		We revisit an existing set of reduction rules for process trees and show that the rules are correct, terminating, confluent and complete, and for which classes of process trees they are and are not complete.
		In a real-life experiment, we show that these rules can reduce process models discovered from real-life event logs further compared with rules that consider only Petri net structures.
	\end{abstract}
	
	\keywords{process mining, block-structured process models, process trees, reduction, language equivalence}
	
	
	\section{Introduction}
	\label{sec:introduction}
		Process models find many uses in organisations nowadays, for instance to map, verify, measure and compare business processes within the organisation, as well as to check compliance with rules and regulations.
Process mining aims to obtain insights from recorded event data about business process executions, stored in event logs.
For instance, conformance checking techniques compare two process models or a process model with an event log of recorded behaviour in the organisation's process, thereby highlighting differences between the model and the actual behaviour  recorded in the event log, or differences between models representing, for instance, different implementations of a process in different geographical regions~\cite{DBLP:conf/caise/EckLLA15}.
Studying these differences may lead to insights into deviations, compliance and performance.

Recently, due to progress in information systems, event recording capabilities have been improved, such that larger and more complex processes can be studied~\cite{DBLP:journals/sosym/LeemansFA18}.
However, such larger and more complex process models have been shown to be more difficult to understand by humans~\cite{DBLP:conf/ifip8-1/SchrepferWMR09}, thus complicating human analysis.
Furthermore, such models challenge automated techniques that are based on state-space explorations: complex models with large state spaces might take an exponential longer time  to process than simpler models with smaller state spaces~\cite{DBLP:books/sp/CarmonaDSW18}.

In particular, \emph{silent steps}, represented by silent transitions in Petri nets~\cite{DBLP:conf/bpm/PedroC16} or $\tau$-leaves in abstract hierarchical views of block-structured workflow nets~\cite{DBLP:books/sp/Aalst16}, can be a source of unnecessary complexity and expand the state space.
Superfluous silent steps might result from process discovery techniques, which construct a process model from an event log, or from process model \emph{projection}, which may be applied by analysts or conformance checking techniques to focus the behaviour of the model on some of its activities.
For instance, the Projected Conformance Checking framework~\cite{DBLP:journals/sosym/LeemansFA18} uses projection to tuple-wise compare the behaviour of a process model with an event log based on all subsets of activities of a certain size.

In this paper, we present an approach to simplify process models, in particular abstract hierarchical representations of block-structured workflow nets (\emph{process trees})~\cite{DBLP:journals/sosym/LeemansFA18}.
Such models result from process discovery techniques \cite{DBLP:conf/cec/BuijsDA12,DBLP:conf/bpm/LeemansFA13,DBLP:conf/otm/LeemansTH18} or can be derived from Petri nets~\cite{DBLP:conf/wsfm/PolyvyanyyVV10,DBLP:journals/dke/VanhataloVK09}.
That is, we present a set of reduction rules that \emph{reduce} the structural complexity of process models while keeping the visible behaviour of the models, that is, their \emph{language}, the same.
Each rule applies to a certain structure in a part of the model; when applied, the rule reduces this part to a smaller part.
Applying these rules has the potential to make models as well as their state spaces smaller.

As all rules are based on the structure of the model rather than on its behaviour, applying the rules repeatedly has the potential of being much less time-consuming than exploring the state space of the full unreduced model.
Hence, our approach can be used as a time-saving pre-processing step to many automated techniques that use process models, or as post-processing steps after process discovery techniques.
Furthermore, by the reduced complexity of the models, they might also be better understandable by human analysis.

To maximise the applicability of the reduction rules, we identified four desirable formal properties that any set of reduction rules must ideally adhere to: 
(1) The set of rules must be \emph{correct}, that is, the application of each rule must preserve the language of the process model that the rule is applied to. 
(2) The set of rules must be \emph{terminating}, that is, exhaustive repeated application of the rules to any process model must be terminating. 
(3) The set of rules must be \emph{confluent}, that is, if multiple rules apply to a process model, then the order in which these rules are applied must not matter: an exhaustive repeated application of the rules should always result in an isomorphic model~\cite{newman1942theories}.
(4) The set of rules must be \emph{complete} with respect to a particular class of models, that is, two exhaustively reduced process models of the class must have the same language if and only if they are syntactically equivalent~\cite{kais}.
This class of models must be as large as possible.

Reduction rules for Petri nets that preserve language have been proposed before~\cite{DBLP:conf/bpm/PedroC16}, however these rules have not been formally defined and correctness, termination, confluence and completeness have not been shown.
Nevertheless, in our evaluation (Section~\ref{sec:evaluation}), we will show that the best results can be obtained by applying both the rules proposed in~\cite{DBLP:conf/bpm/PedroC16} as well as the rules proposed in this paper.
Furthermore, we show that the rules of this paper are correct, terminating, confluent and complete, and provide an implementation of both sets of rules in the ProM framework~\cite{DBLP:conf/apn/DongenMVWA05}.

The rules used in this paper were published as formal definitions in an appendix of~\cite{kais}.
This paper extends the explanation of these rules, and analyses and evaluates them. In particular, \emph{this paper contributes}:
\begin{itemize}
	\item A detailed categorisation, motivation and explanation of the rules (Section~\ref{sec:motivation});
	\item Formal proofs of their correctness, termination, confluence and completeness (Section~\ref{sec:rediscoverability});
	\item An exploration of the boundaries of the class of process trees for which the rules are complete (Section~\ref{sec:rediscoverability});
	\item A study of the differences between the rules and an existing set of rules~\cite{DBLP:conf/bpm/PedroC16} (Section~\ref{sec:rediscoverability});
	\item A real-life evaluation of the rules, that is, a comparison with an existing set of rules~\cite{DBLP:conf/bpm/PedroC16} (Section~\ref{sec:evaluation}).
\end{itemize}

The contributions of this paper are revised and extended from a PhD thesis~\cite{leemans2017robust}.
In the remainder of this paper, we first explore related work in Section~\ref{sec:relatedwork}, introduce concepts (Section~\ref{sec:preliminaries}) and restate the reduction rules (Section~\ref{sec:rules}).
Second, we motivate the rules and analyse their formal properties in Section~\ref{sec:algorithm}.
Third, we evaluate the new rules in Section~\ref{sec:evaluation}.
Finally, the paper is concluded in Section~\ref{sec:conclusion}.

	\section{Related Work}
	\label{sec:relatedwork}
		Several sets of reduction rules for Petri nets have been proposed before.
For instance, in~\cite{murata1989petri}, 6 reduction rules were proposed that preserve liveness (whether the net can deadlock), safeness (whether the net never puts more than one token in a place) and boundedness (whether the net can only put a limited number of tokens in each place).
These rules have been applied as a pre-processing step in the Woflan tool~\cite{DBLP:journals/cj/VerbeekBA01}, which uses these rules to reduce the net before further soundness checks are performed.
Similarly, reduction rules have been used in the context of Ada-programs~\cite{DBLP:journals/tpds/ShatzTMD96,DBLP:journals/tse/MurataSS89} to assess concurrent programs for soundness issues, and in the context of timed Petri nets~\cite{DBLP:journals/acta/SloanB96} to assess soundness and timing properties.
Furthermore, property-preserving reduction rules have been introduced for live and safe or bounded free-choice Petri nets~\cite{DBLP:conf/concur/Desel90,DBLP:conf/apn/EsparzaS90}, and for coloured Petri nets~\cite{DBLP:conf/apn/Haddad88}.

In~\cite{DBLP:conf/apn/Berthelot85} transformations are studied to make Petri nets smaller while preserving properties like boundedness, safeness, S-invariants (whether a weighted set of places always contains the same number of tokens), proper termination (a given final marking can always be reached), presence of home states (a particular marking is reachable from every reachable marking), unavoidable markings, and liveness.

The LoLa tool~\cite{DBLP:conf/apn/Wolf07} can verify whether certain properties hold for Petri nets and give counterexamples, such as CTL and LTL formulae.
It traverses the state space of the Petri net after applying reduction techniques such as stubborn sets~\cite{DBLP:conf/apn/Valmari89}, symmetry reduction, coverability graph reduction, the sweep-line method, invariant calculus and bit hashing.
Depending on the property to be checked, reduction techniques are in- or excluded, and the reduction techniques are reordered.

Another reduction technique for soundness-related properties is to transform a process model into a Refined Process Structure Tree (RPST)~\cite{DBLP:journals/dke/VanhataloVK09,DBLP:conf/wsfm/PolyvyanyyVV10}, which constructs a hierarchy of single-entry single-exit fragments to the extent possible.
Consequently, the structure of RPSTs is exploited to verify soundness: the entire model is sound if and only if all the fragments are sound, and the fragments are smaller and thus easier to check than the full model~\cite{DBLP:journals/dke/FahlandFKLVW11}.
Similarly, in~\cite{DBLP:conf/caise/DongenAV05,DBLP:journals/topnoc/LohmannVD09} models of various formalisms are reduced before being translated to Petri nets.
Due to the similarity between RPSTs and the hierarchical block-structured Petri nets used in this paper, general Petri nets could be reduced by first translating them to RPSTs, second reducing them with reduction rules, and finally translating them to Petri nets again.
The method presented in~\cite{DBLP:journals/tsmc/LeeF85,DBLP:journals/tsmc/Lee-KwangF89} might be used in a similar fashion.

However, all these reduction techniques aim to preserve certain properties of Petri nets, but not their \emph{language}.
The first exception is the stubborn sets technique~\cite{DBLP:conf/apn/Valmari89}, which, under some conditions, can reduce process models while preserving their language.
The second exception was presented in~\cite{DBLP:conf/bpm/PedroC16}, in which a set of language-preserving reduction rules for Petri nets is proposed, which is based on the set of rules of~\cite{murata1989petri}.
These rules target redundant places and silent transitions, which are removed to make the net smaller.
The third exception is~\cite{DBLP:conf/IEEEicci/GruhnL09}, in which BPMN models are reduced to make them better understandable.
The reduction rules considered in this paper treat behaviour on a higher level: process trees.
These sets~\cite{DBLP:conf/bpm/PedroC16,DBLP:conf/IEEEicci/GruhnL09} have neither been formally defined nor has correctness, termination, confluence and completeness been shown.

In our evaluation (Section~\ref{sec:evaluation}), we will show that for process trees discovered from real-life event logs it is often beneficial to first apply the reduction rules of this paper, then translate the tree to a Petri net and finally apply the Petri net reduction rules of~\cite{DBLP:conf/bpm/PedroC16}.
This combination of process tree and Petri net reduction techniques gave the smallest models in the majority of tested cases.
   
	\section{Preliminaries}
	\label{sec:preliminaries}
		Given an alphabet $\Sigma$ of process steps (activities), a \emph{Petri net} is a tuple $(P, T, f, l)$ in which $P$ is a set of places, $T$ is a set of transitions, $f \subseteq (P \times T) \cup (T \times P)$ is a flow relation and $l \colon T \rightarrow \Sigma \cup \{\tau\}$ is a labelling function, such that $S \cap T = \emptyset$ and $\tau \notin \Sigma$.
A \emph{marking} is a multiset of places, indicating the state of the net, indicating \emph{tokens}.
A transition $t$ can fire in a marking $M$ if $\forall_{(p, t) \in f} t \in M$.
When $t$ fires, then $M$ is updated according to $f$, and if $l(t) = a, a \in \Sigma$, activity $a$ is said to be executed.
Otherwise, if $l(t) = \tau$, i.e. $t$ is a \emph{silent transition}, no activity is executed.
The sequence of activities (without $\tau$s) corresponding to a sequence of transition firings that brings the net from a given \emph{initial} to a \emph{final} marking is a \emph{trace}~\cite{DBLP:conf/bpm/PedroC16}.

A \emph{workflow net} is a Petri net in which there is one place without incoming transitions (\emph{source}), there is one place without outgoing transitions (\emph{sink}), and all other elements are on directed paths from source to sink.
The initial and final markings have a token on respectively the source or the sink~\cite{DBLP:journals/tkde/AalstWM04}.
Figure~\ref{fig:workflow} shows an example.
A workflow net is \emph{sound} if each transition can be fired and, from each reachable marking, the final marking is reachable.
Sound models are hence free of deadlocks and other anomalies.

\begin{figure}
	\centering
	\begin{tikzpicture}
		[place/.style={draw, circle}, 
		transition/.style={draw, minimum height=0.5cm},
		silent transition/.style={transition, fill},
		treeNode/.style={draw, dashed}]
		
		\node (source) [place] {};
		
		\def\firstone{5mm}
		\def\secondone{5mm};
		\def\milestoneplace{5mm};
		\def\dist{3.3mm};
		\node (int1) [silent transition, right of=source] {};
		
		\node (int10) [place, right of=int1, yshift=\firstone] {};
		\node (int11) [silent transition, right=\dist of int10] {};
		\node (int12) [place, right=\dist of int11, yshift=\secondone] {};
		\node (int112) [place, right=\dist of int11, yshift=-\secondone] {};
		\node (int13) [silent transition, right=\dist of int12] {};
		\node (int113) [silent transition, right=\dist of int112] {};
		\node (int14) [place, right=\dist of int13] {};
		\node (a) [transition, right=\dist of int14, label=above right:$a$] {};
		\node (int16) [place, right=\dist of a] {};
		\node (int17) [silent transition, right=\dist of int16] {};
		\node (int18) [place, right=\dist of int17] {};
		\node (int19) [silent transition, right=\dist of int18, yshift=-\secondone] {};
		\node (int20) [place, right=\dist of int19] {};
		
		\node (int21) [place, right=\dist of int113] {};
		\node (b) [transition, right=\dist of int21, label=above right:$b$] {};
		\node (int23) [place, right=\dist of b] {};
		\node (int117) [silent transition, right=\dist of int23] {};
		\node (int118) [place, right=\dist of int117] {};
		\node (milestone) [place] at ($(a)!0.5!(b)$) {};
		
		\node (c) [transition, below=2mm of b, label=above right:$c$] {};		
		\node (int4) [place, left of=c] {};
		\node (int6) [place, right of=c] {};
		
		\node (int9) [silent transition, right of=int20, yshift=-\firstone] {};
		
		\draw [->] (source) to (int1);
		\draw [->] (int1) to (int4);
		\draw [->] (int4) to (c);
		\draw [->] (c) to (int6);
		\draw [->] (int6) to (int9);
		\draw [->] (int1) to (int10);
		\draw [->] (int10) to (int11);
		\draw [->] (int11) to (int12);
		\draw [->] (int12) to (int13);
		\draw [->] (int13) to (int14);
		\draw [->] (int14) to (a);
		\draw [->] (a) to (int16);
		\draw [->] (int16) to (int17);
		\draw [->] (int17) to (int18);
		\draw [->] (int18) to (int19);
		\draw [->] (int19) to (int20);
		\draw [->] (int113) to (int21);
		\draw [->] (int112) to (int113);
		\draw [->] (int11) to (int112);
		\draw [->] (int11) to (milestone);
		\draw [->] (milestone) to (int13);
		\draw [->] (milestone) to (int113);
		\draw [->] (int21) to (b);
		\draw [->] (b) to (int23);
		\draw [->] (int23) to (int117);
		\draw [->] (int117) to (milestone);
		\draw [->] (int117) to (int118);
		\draw [->] (int118) to (int19);
		\draw [->] (milestone) to (int19);
		\draw [->] (int17) to (milestone);
		\draw [->] (int20) to (int9);
		
		\node (int) [treeNode, fit=(int10) (a) (int20) (b)] {};
		\node (con) [treeNode, fit=(int1) (a) (int9) (c)] {};
		
		\node (seq2) [place, above=3.1mm of a] {};
		\node (seq1) [transition, left of=seq2, label=above right:$d$] {};
		\node (seq3) [transition, right of=seq2, label=above right:$e$] {};
		
		\draw [->] (source) |- (seq1);
		\draw [->] (seq1) to (seq2);
		\draw [->] (seq2) to (seq3);
		
		\node (seq) [treeNode, fit=(seq1) (seq3)] {};
		
		\node (loop3) [transition, below=2.2mm of c, label=above right:$f$] {};
		\node (loop2) [place, left of=loop3] {};
		\node (loop1) [silent transition, left of=loop2] {};		
		\node (loop4) [place, right of=loop3] {};
		\node (loop5) [transition, below=1mm of loop3, label=above right:$g$] {};
		\node (loop6) [silent transition, right of=loop4] {};

		\draw [->] (source) |- (loop1);		
		\draw [->] (loop1) to (loop2);
		\draw [->] (loop2) to (loop3);
		\draw [->] (loop3) to (loop4);
		\draw [->] (loop4) |- (loop5);
		\draw [->] (loop5) -| (loop2);
		\draw [->] (loop4) to (loop6);
		
		\node (loop) [treeNode, fit=(loop1) (loop6) (loop5)] {};
		
		\node (sink) [place, right of=int9] {};
		\draw [->] (loop6) -| (sink);
		\draw [->] (seq3) -| (sink);
		\draw [->] (int9) to (sink);
		
		\node (tree) [treeNode, fit=(source) (loop5) (seq1) (sink)] {};
		
	\end{tikzpicture}
	\caption{A block-structured workflow net. The dashed rectangles denote the blocks of the corresponding process tree.
		}
	\label{fig:workflow}
\end{figure}
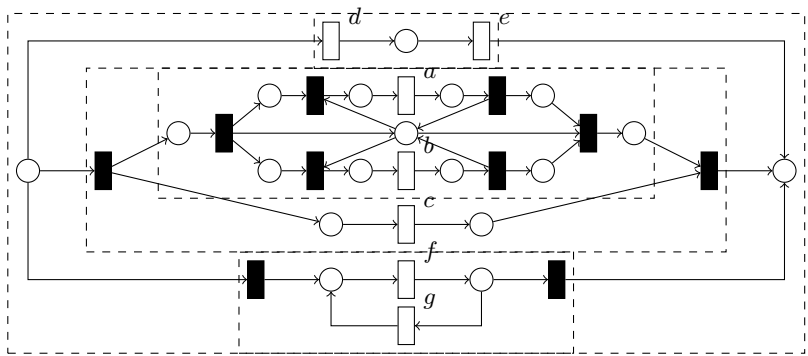

A \emph{block-structured} workflow net is a workflow net that can be hierarchically divided into parts with a single entry and a single exit. Figure~\ref{fig:workflow} contains a block-structured workflow net, and the hierarchical blocks have been annotated with dashed rectangles.
A \emph{process tree} is an abstract view on (some) block-structured workflow nets, consisting of \emph{nodes} and \emph{leaves}, expressing a language.
Formally: $a \in \Sigma$ is a process tree \emph{leaf} expressing the language $\{ \langle a \rangle \}$; $\tau \notin \Sigma$ is a process tree \emph{leaf} expressing the language with the empty trace $\{\epsilon\}$; and for process trees $S_1 \ldots S_n$ and $\op$ a process tree operator, $\op(S_1, \ldots S_n)$ is a process tree \emph{node}, expressing a combination of the languages of its children $S_1 \ldots S_n$, depending on the operator $\op$.
We consider six process tree operators $\op$, whose language $\lang{\op(S_1, \ldots S_n)}$ is defined as follows for different $\op$:
\begin{itemize}
	\item[$\xorOp$] denotes the exclusive choice between the children: 
	$$\lang{\xorOp(S_1, \ldots S_n)} = \bigcup_{1\leq i \leq n} \lang{S_i}$$
	\item[$\sequenceOp$] denotes the sequential composition of the children: 
	$$\lang{\sequenceOp(S_1, \ldots S_n)} = \lang{S_1} \cdot \lang{S_2} \cdots \lang{S_n}$$
	\item[$\interleavedOp$] denotes the interleaved composition of the children: all children must execute, but their execution cannot overlap: 
	$$\lang{\interleavedOp(S_1, \ldots S_n)} = \bigcup_{i_1 \ldots i_n\in p(n)} \lang{\sequenceOp(S_{i_1}, \ldots S_{i_n})}$$
	\item[$\concurrentOp$] denotes the concurrent composition of the children: all children must execute and their execution may overlap: 
	$$\lang{\concurrentOp(S_1, \ldots S_n)} = \lang{S_1} \shuffle \lang{S_2} \shuffle \ldots \lang{S_n}$$
	\item[$\orOp$] denotes the inclusive choice between the children: at least one child must execute, and children may overlap: 
	$$\lang{\orOp(S_1, \ldots S_n)} = \bigcup_{i_1 \ldots i_m\in q(n)} \lang{\concurrentOp(S_{i_1}, \ldots S_{i_m})}$$
	\item[$\loopOp$] denotes the execution of the first child (the \emph{body} part), followed by the optional repeated execution of a non-first child (a \emph{redo} part) and the first child: 
	$$\lang{\loopOp(S_1, \ldots S_n)} = \lang{S_1} \cdot ( \lang{\xorOp(\lang{S_2}, \ldots \lang{S_n})} \cdot \lang{S_1})^*$$
\end{itemize}
In which $p(n)$ denotes the set of all permutations of the numbers $1 \ldots n$, $\shuffle$ is a language shuffle operator~\cite{kais} and $q(n)$ is the set of all subsets of the numbers $1 \ldots n$.
For instance, consider $M_e = \xorOp(\concurrentOp(\interleavedOp(a, b), c), \sequenceOp(d, e), \loopOp(f, g))$.
Then, $\lang{M_e}$ is $\{ 
\langle a, b, c \rangle$, $\langle a, c, b \rangle$, $\langle c, a, b \rangle$, $\langle c, b, a \rangle$, $\langle b, a, c \rangle$, $\langle b, c, a \rangle$,
$\langle d, e \rangle$, 
$\langle f \rangle$, $\langle f, g, f \rangle$, $\langle f, g, f, g, f\rangle$, $\ldots \}$.
A process tree can be translated to a workflow net~\cite{leemans2017robust}, and for the operators considered in this paper, this workflow net is sound by construction.
For instance, $M_e$ can be translated to the workflow net in Figure~\ref{fig:workflow}.
	
	
	\section{Reduction Rules for Process Trees}	
	\label{sec:rules}
			In this section, we restate the process tree reduction rules as formally defined in~\cite{kais}.
	In the following definitions, $M$ refers to a process tree and $\ldots$ refers to any number of process trees (possibly 0).
	
	\begin{definition}[Singularity]
	\label{def:rules:sing}
		Let $\op \in \{\xorOp, \sequenceOp, \concurrentOp, \interleavedOp, \orOp\}$.
		Then:
		\begin{align*}
			&(\ruleSin{}) & \op(M) \Rightarrow {}& M
		\end{align*}
	\end{definition}
	
	\begin{definition}[Associativity]
	\label{def:rules:assoc}
		\begin{align*}
			&(\ruleAssXor{}) & \xorOp ( \ldots_1, \xorOp(\ldots_2) ) \Rightarrow {}& \xorOp( \ldots_1, \ldots_2 ) \\
            &(\ruleAssSeq{}) & \sequenceOp ( \ldots_1, \sequenceOp(\ldots_2), \ldots_3 ) \Rightarrow {}& \sequenceOp( \ldots_1, \ldots_2, \ldots_3 )\\
                &(\ruleAssCon{}) & \concurrentOp ( \ldots_1, \concurrentOp(\ldots_2) ) \Rightarrow {}& \concurrentOp( \ldots_1, \ldots_2 )	\\
                &(\ruleAssOr{}) & \orOp(\ldots_1, \orOp(\ldots_2)) \Rightarrow{}& \orOp(\ldots_1, \ldots_2) \\
                &(\ruleAssLoopB{}) & \loopOp ( \loopOp( M, \ldots_1 ), \ldots_2 ) \Rightarrow {}& \loopOp( M, \ldots_1, \ldots_2 )\\
                &(\ruleAssLoopR{}) & \loopOp ( M, \ldots_1, \xorOp( \ldots_2 ) ) \Rightarrow {}& \loopOp( M, \ldots_1, \ldots_2 )
		\end{align*}
	\end{definition}
	
	\begin{definition}[$\tau$s: non-Behaviour-Adding]
	\label{def:rules:nba}
		\begin{align*}
			&(\ruleTauSeq{}) & \sequenceOp(\ldots, M, \tau) \Rightarrow {}& \sequenceOp(\ldots, M) \\
			&(\ruleTauCon{}) & \concurrentOp(\ldots, M, \tau) \Rightarrow {}& \concurrentOp(\ldots, M) \\
			&(\ruleTauInt{}) & \interleavedOp(\ldots, M, \tau) \Rightarrow {}& \interleavedOp(\ldots, M)\\
			&(\ruleTauLoopBR{}) & \loopOp (\tau, \tau) \Rightarrow {}& \tau
		\end{align*}
	\end{definition}
	
	\begin{definition}[$\tau$s: Double Skips]
	\label{def:rules:ds}
		Let $Q$ be a process tree such that $\epsilon \in \lang{Q}$.
		Then:
		\begin{align*}
			&(\ruleTauXor{}) & \xorOp(\ldots, Q, \tau)\ \Rightarrow {}& \xorOp(\ldots, Q)\\
			&(\ruleTauLoopR{}) & \loopOp(M, \ldots, Q, \tau)\ \Rightarrow {}& \loopOp(M, \ldots, Q)
		\end{align*}
	\end{definition}
	
	\begin{definition}[$\tau$s: Pulling-up]
	\label{def:rules:pu}
		Let $P$ be a process tree such that $\lang{P} \neq \{\epsilon\}$.
		Then:
		\begin{align*}
			&(\ruleTauOr{}) & \orOp(\ldots, M, \tau) \Rightarrow{}& \xorOp(\tau, \orOp(\ldots, M))\\
			&(\ruleTauOrXor{}) & \orOp(\ldots_1, \xorOp(\ldots_2, M, \tau)) \Rightarrow{}& \xorOp(\tau, \orOp(\ldots_1, \xorOp(\ldots_2, M)))\\
			&(\ruleTauLoopB{}) & \loopOp(\tau, \ldots, P) \Rightarrow {}& \xorOp(\tau, \loopOp(\xorOp(\ldots, P), \tau))
		\end{align*}
	\end{definition}
	
	\begin{definition}[Implicit $\concurrentOp$]
	\label{def:rules:con}
		Let $Q_1$, $Q_2$ be process trees such that $\epsilon \in \lang{Q_1}, \lang{Q_2}$, and let $S_1 \ldots S_n$ be process trees such no trace in the language of these trees has more than one event ($\forall_{1 \leq i \leq n, t \in \lang{S_i}} |t| \leq 1$).
		Then:
		\begin{align*}
                &(\ruleConInt{}) & \interleavedOp(S_1, \ldots S_n) \Rightarrow {}& \concurrentOp(S_1, \ldots S_n)\\
                &(\ruleConOr{}) & \concurrentOp(\ldots, Q_1, Q_2) \Rightarrow{}& \concurrentOp(\ldots, \orOp(Q_1, Q_2))
		\end{align*}
	\end{definition}
	
	\section{Motivation \& Analysis}
	\label{sec:algorithm}
		In this section, we motivate and analyse the set of reduction rules.
We first introduce the rules in their categories, and illustrate them with examples.
Second, we prove that the set of rules is correct, terminating, confluent and complete.

\subsection{Motivation}
\label{sec:motivation}
	For ease of explanation, we identify four categories of rules: (1) singularity rules, targeting nodes with one child, (2) associativity rules, targeting nodes with a child of the same process tree operator, (3) $\tau$-reduction rules, and (4) concurrency reduction rules, targeting nodes that are concurrent but appear as different constructs.

	\fakesub{Singularity (Definition~\ref{def:rules:sing}).}
		The singularity rule $\ruleSin{}$ removes any subtree that has only one child.
		Consequently, it applies to any process tree operator except $\loopOp$, as $\loopOp$-nodes always have at least 2 children.
		For instance, the \ruleSin{} rule performs the reduction of Figure~\ref{fig:exa:sing} in two steps.
		
		\begin{figure}
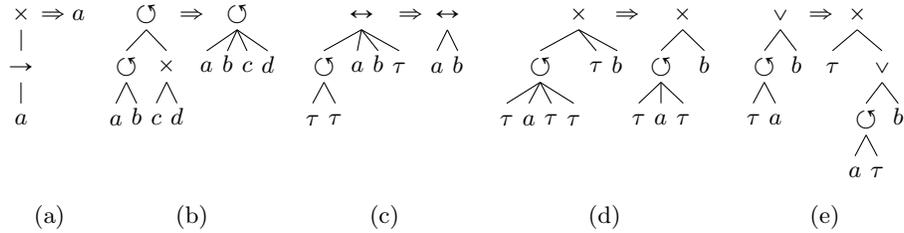

			\centering
			\begin{subfigure}[b]{0.09\linewidth}
				$\inlineTree{$\xorOp$}{ [.$\sequenceOp$ [.$a$ \edge[draw=none]; ~ ] ] } \Rightarrow a$
				\caption{}
				\label{fig:exa:sing}
			\end{subfigure}%
			\begin{subfigure}[b]{0.22\linewidth}
				\centering
				$\inlineTree{$\loopOp$}{ [.$\loopOp$ [.$a$ \edge[draw=none]; ~ ] $b$ ] [.$\xorOp$ $c$ $d$ ] } \hspace{-1.5mm} \Rightarrow \hspace{-1.5mm} \inlineTree{$\loopOp$}{$a$ $b$ $c$ $d$}$
				\caption{}
				\label{fig:exa:asso}
			\end{subfigure}%
			\begin{subfigure}[b]{0.20\linewidth}
				\centering
				$\inlineTree{$\interleavedOp$}{ [.$\loopOp$ $\tau$ [.$\tau$ \edge[draw=none]; ~ ] ] $a$ $b$ $\tau$ } \hspace{-2mm} \Rightarrow \inlineTree{$\interleavedOp$} {$a$ $b$ }$
				\caption{}
				\label{fig:exa:nba}
			\end{subfigure}%
			\begin{subfigure}[b]{0.28\linewidth}
				\centering
				$\inlineTree{$\xorOp$}{ [.$\loopOp$ $\tau$ [.$a$ \edge[draw=none]; ~ ] $\tau$ $\tau$ ] $\tau$ $b$ } \hspace{-2mm} \Rightarrow \hspace{-2mm} \inlineTree{$\xorOp$}{ [.$\loopOp$ $\tau$ $a$ $\tau$ ] $b$ }$
				\caption{}
				\label{fig:exa:ds}
			\end{subfigure}%
			\begin{subfigure}[b]{0.20\linewidth}
				\centering
				$\inlineTree{$\orOp$}{ [.$\loopOp$ $\tau$ $a$ ] $b$ } \Rightarrow \hspace{-2mm} \inlineTree{$\xorOp$}{$\tau$ [.$\orOp$ [.$\loopOp$ $a$ $\tau$ ] $b$ ] }$
				\caption{}
				\label{fig:exa:pu}
			\end{subfigure}
			\caption{Example reduction rule applications.}
		\end{figure}
	
	\fakesub{Associativity (Definition~\ref{def:rules:assoc}).}
		The associativity rules target nested nodes of the same operator.
		There are associativity rules for $\xorOp$, $\sequenceOp$, $\concurrentOp$ and $\orOp$, and two for $\loopOp$, targeting both first and non-first children.
		Figure~\ref{fig:exa:asso} shows an example reduction.
		Please note that in rule $\ruleAssSeq{}$, due to the non-symmetry of $\sequenceOp$, children might be present before and after the inner $\sequenceOp$-child.
		
		There is no associativity rule for the interleaved operator $\interleavedOp$, as this operator is not associative, that is, the nesting of $\interleavedOp$ nodes matters.
		For instance, the trees $\interleavedOp(a, b, c)$ and $\interleavedOp(\interleavedOp(a, b), c$) do not have the same language, as the first tree expresses the trace $\langle a, c, b \rangle$ but the second tree does not.
		
	\fakesub{$\tau$-Reduction.}
		A source of complexity in process trees are silent steps ($\tau$-leaves).
		In this section, we identify three types of reduction rules for $\tau$ leaves: (1) rules that target $\tau$ leaves that do not add anything to the language of a tree, (2) rules that target duplicated skipping constructs ($\xorOp(\tau,.)$), and (3) rules that pull skips up higher in the tree.
	
		First, a $\tau$ leaf has no influence on the language if that leaf is a child of a $\sequenceOp$, $\concurrentOp$ or $\interleavedOp$ node.
		Hence, such leaves can be removed without altering the language (Definition~\ref{def:rules:nba}).
		Also, if all children of a node are $\tau$-leaves, then the node itself can be replaced by a $\tau$, which can be achieved for the aforementioned process tree operators by rule $\ruleSin{}$.
		As $\ruleSin{}$ does not apply to $\loopOp$, we need a separate rule to transform $\loopOp(\tau,\tau)$-constructs into $\tau$-leaves ($\ruleTauLoopBR{}$).
		Please note that in these rules, $\tau$ children are only removed if they are not the only child of their parent, preventing the creation of operator nodes without children.		
		Figure~\ref{fig:exa:nba} shows an example reduction.
		
		Second, an $\xorOp$ node with a $\tau$ child expresses that the node can be skipped. 
		That is, execution of the $\xorOp$ node might not result in any visible execution of an activity.
		However, multiple children of the $\xorOp$ node might be able to produce an empty trace.
		Then, the $\tau$ child does not add anything to the language of the tree and can be removed (Definition~\ref{def:rules:ds}).
		This also holds for non-first children of $\loopOp$ nodes.
		An example reduction is shown in Figure~\ref{fig:exa:ds}.
		Please note that we defined $Q$ to contain the empty trace.
		This can be easily checked using a recursive function based on the structure of process trees; a state-space exploration is not necessary (for details, see \appRef{app:treeEmptyTrace}).
		
		Third, we consider more complex $\tau$-reduction rules, regarding the ``skip'' construct $\xorOp(\tau, .)$: these constructs can be pulled up in the hierarchy of the process tree (Definition~\ref{def:rules:pu}).
		An example reduction using these rules is shown in Figure~\ref{fig:exa:pu}.
		Next, we explain design decisions and particularities of these rules.
						
		\paragraph{Pulling Up vs. Pushing Down.}
			Intuitively, a skip construct can be pushed down or pulled up in a tree.
			For instance, consider the trees $\xorOp(\tau, \orOp(a,  b))$ and $\orOp(a, b, \tau)$.
			Both process trees express the same language.
			However, in the first tree, the skip construct is at the top of the tree ($\xorOp(\tau,.)$), while in the second tree the skip construct is at a position lower in the tree and is expressed differently ($\tau$).
		
			Thus, in some cases, one can choose to either pull skips up in the tree or push skips down in the tree.
			Pulling skips up in the tree is deterministic (as each tree node only has one parent), while pushing skips down is not deterministic (it might be possible to push skips to \emph{every} child independently).
			Such nondeterminism prevents confluence and completeness, as it might not be clear to \emph{which} child a skip construct should be pushed down.
			Therefore, our reduction rules pull skip constructs up in the tree.
	
		\paragraph{Increasing Number of Nodes.}
			On first sight, the use of these reduction rules might not be obvious as they \emph{increase} the number of nodes in the tree.
			However, in some cases a temporary increase in nodes might be necessary to achieve a smaller process tree later on.
			For instance, consider the following sequence of process trees, each obtained by applying a reduction rule to the previous one:
			$$
				\inlineTree{$\xorOp$}{ [.$\orOp$ $a$ $\tau$ ] [.$\orOp$ $b$ $\tau$ ] } 
				\xrightarrow{\ruleTauOr{} (\times 2)} 
				\inlineTree{$\xorOp$}{ [.$\xorOp$ $\tau$ [.$\orOp$ $a$ ] ] [.$\xorOp$ $\tau$ [.$\orOp$ $b$ ] ] }
				\xrightarrow{\ruleSin{} (\times 2)}
				\inlineTree{$\xorOp$}{ [.$\xorOp$ $a$ $\tau$ ] [.$\xorOp$ $b$ $\tau$ ] }
				\xrightarrow{\ruleAssXor{} (\times 2)}
				\inlineTree{$\xorOp$}{ $a$ $\tau$ $b$ $\tau$ }
				\xrightarrow{\ruleTauXor{}}
				\inlineTree{$\xorOp$}{$a$ $b$ $\tau$}
			$$
			\noindent
			Without rule \ruleTauOr{}, the first tree cannot be reduced and would be in normal form.
			With this rule, the size of the tree is temporarily increased, however afterwards the tree can be reduced further by other rules.
			In Section~\ref{sec:analysis}, we will show that even though \ruleTauOr{} increases the number of nodes, the set of rules still cannot be applied endlessly.
	
	\fakesub{Implicit $\concurrentOp$ (Definition~\ref{def:rules:con}).}
		The last rules relate to the concurrency operator $\concurrentOp$: some $\orOp$ and $\interleavedOp$ constructs might represent the same language as a comparable $\concurrentOp$ construct.
		
		Please note that $\ruleConInt{}$ requires a check whether the children $S_1 \ldots S_1$ can only produce traces that have length zero or one.
		A structure-based function for this is available in \appRef{app:shortTraces}.		
		Also note that due to the non-associativity of $\interleavedOp$, the rule $\ruleConInt{}$ can only be applied to \emph{all} children of a $\interleavedOp$ node at the same time.
		For instance, the trees $\interleavedOp(a, b, \sequenceOp(c, d))$ and $\interleavedOp(\concurrentOp(a, b), \sequenceOp(c, d))$ have a different language.
		
		\paragraph{Increasing Number of Nodes.}
			These rules increase the number of nodes, but have the potential to decrease the size of trees in combination with other rules.
			Consider the following sequence of trees, in which rule \ruleConOr{} is necessary to reduce the first tree, to which no other rules can be applied:
			$$
				\inlineTree{$\concurrentOp$}{[.$\xorOp$ $\tau$ $a$ ] [.$\xorOp$ $\tau$ $b$ ] }
				\xrightarrow{\ruleConOr{}}
				\inlineTree{$\concurrentOp$}{[.$\orOp$ [.$\xorOp$ $\tau$ $a$ ] [.$\xorOp$ $\tau$ $b$ ] ] }
				\xrightarrow{\ruleSin}
				\inlineTree{$\orOp$}{ [.$\xorOp$ $\tau$ $a$ ] [.$\xorOp$ $\tau$ $b$ ] }
				\xrightarrow{\ruleTauOr{} (\times 2)}
				\inlineTree{$\xorOp$}{ $\tau$ [.$\xorOp$ $\tau$ [.$\orOp$ [.$\xorOp$ $a$ ] [.$\xorOp$ $b$ ] ] ] }
				\xrightarrow{\ruleAssXor, \ruleSin{}~(\times 2)}
				\inlineTree{$\xorOp$}{ $\tau$ $\tau$ [.$\orOp$ $a$ $b$ ] }
				\xrightarrow{\ruleTauXor}
				\inlineTree{$\xorOp$}{ $\tau$ [.$\orOp$ $a$ $b$ ] }
			$$

			\vspace{-4mm}
		\paragraph{Process Trees vs. Petri Nets.}
			All reduction rules aim to decrease the number of nodes in a process tree.
			However, some use cases require the tree to be translated to a Petri net~\cite{DBLP:books/sp/Aalst16}, thus shifting the intended minimisation from the size of the process tree to the size of the translated Petri net.
			Intuitively, minimising the number of nodes in a process tree also minimises the number of nodes in a corresponding translated Petri net, except for the operators $\interleavedOp$ and $\orOp$, as typical translations of these operators involve many Petri net elements~\cite{kais}, thus introducing $\interleavedOp$ and $\orOp$ nodes typically increases complexity in a translated Petri net.
			Rule $\ruleConInt{}$ helps in this regard as it removes an $\interleavedOp$ node and adds a $\concurrentOp$ node, which decreases Petri net complexity.
			However, rule $\ruleConOr{}$ replaces a $\concurrentOp$ with an $\orOp$ node, which increases Petri net complexity.				
			Therefore, if a use case asks for a smallest Petri net, one could consider excluding or reversing the rule $\ruleConOr{}$, however the influence on the formal results is outside the scope of this paper.
    	
\subsection{Analysis}
\label{sec:rediscoverability}
\label{sec:analysis}
	In this section, we analyse the reduction rules with respect to the four properties, described in Section~\ref{sec:introduction}, that ideally each set of rules should adhere to: correctness, termination, confluence and completeness.
	Furthermore, we discuss limitations of the set of rules, compare them to the Petri net rules of~\cite{DBLP:conf/bpm/PedroC16}, and discuss their implementation.

	\fakesub{Correctness.}
		First, rules need to preserve the language of the process trees they are applied to.
		As argued in Section~\ref{sec:motivation} and by the semantics of process trees:
		\begin{lemma}
			The reduction rules preserve language.
		\end{lemma}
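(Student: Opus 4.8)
The plan is to reduce the global statement to a purely local check and then dispatch the individual rules by unfolding the semantic definitions of the operators. Since each rule rewrites a subtree $M$ into a tree $M'$, and a rule may fire anywhere inside a larger tree, I would first establish that the language semantics is a \emph{congruence}: for every process-tree context $C[\cdot]$ (a tree with a hole) and all trees $M, M'$ with $\lang{M} = \lang{M'}$, we have $\lang{C[M]} = \lang{C[M']}$. This follows by a straightforward structural induction on the context, using the fact that for every operator the language $\lang{\op(S_1,\dots,S_n)}$ is defined solely in terms of $\lang{S_1},\dots,\lang{S_n}$. This congruence reduces the lemma to checking, for each rule $L \Rightarrow R$ separately, that $\lang{L} = \lang{R}$ when the rule fires at the root.

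For the ``easy'' rules the identities reduce to algebraic laws of the underlying language operations: union is idempotent and associative; language concatenation is associative with unit $\{\epsilon\}$; and the shuffle $\shuffle$ is associative and commutative with unit $\{\epsilon\}$. Thus $\ruleAssXor$, $\ruleAssSeq$, $\ruleAssCon$ and $\ruleAssOr$ are immediate, and for $\ruleSin$ I would check that each operator with a single child collapses to $\lang{M}$ (for $\orOp$ the only nonempty subset of $\{1\}$ is $\{1\}$, and for $\interleavedOp$ the only permutation of one element is the identity). For the non-behaviour-adding rules $\ruleTauSeq$, $\ruleTauCon$ and $\ruleTauInt$ I use $\lang{\tau}=\{\epsilon\}$ together with the unit laws, while $\ruleTauLoopBR$ expands directly to $\{\epsilon\}$. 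For the double-skip rules $\ruleTauXor$ and $\ruleTauLoopR$, the side condition $\epsilon \in \lang{Q}$ makes the trace contributed by the removed $\tau$ already present, so it is redundant in the union.

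The pulling-up and implicit-$\concurrentOp$ rules require expanding the inclusive-choice and loop semantics. For $\ruleTauOr$ and $\ruleTauOrXor$ I would show that splitting off the $\tau$ child isolates exactly the empty trace: every subset containing $\tau$ shuffles $\{\epsilon\}$ into the remaining children and therefore contributes nothing beyond the corresponding $\tau$-free subset, while the singleton subset $\{\tau\}$ contributes $\{\epsilon\}$; hence the two sides agree. For $\ruleConInt$ I use the length-$\leq 1$ hypothesis: when no child can emit more than one event, interleaving and shuffle coincide, since a single event can never be split so as to overlap another, so both reduce to all orderings of the chosen events. For $\ruleConOr$ I use $\epsilon \in \lang{Q_1},\lang{Q_2}$ to show that $\lang{\orOp(Q_1,Q_2)}$ collapses to the shuffle $\lang{Q_1}\shuffle\lang{Q_2}$, after which associativity and commutativity of $\shuffle$ give the claim.

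The main obstacle I expect is the loop rules, because the loop semantics contains a Kleene star and so the identities cannot be verified by a single unfolding. For $\ruleTauLoopB$, writing $X$ for $\lang{\xorOp(\ldots,P)}$, I would have to show $\lang{\loopOp(\tau,\ldots,P)} = X^{*}$ while the right-hand side evaluates to $\{\epsilon\}\cup \bigl(X \cdot (\{\epsilon\}\cdot X)^{*}\bigr) = \{\epsilon\}\cup X^{+} = X^{*}$; the nontrivial content is the Kleene-algebra identity $X\cdot X^{*} = X^{+}$ and the observation that the body $\tau$ contributes only the unit factor. For the loop-associativity rules $\ruleAssLoopB$ and $\ruleAssLoopR$ I would argue that merging the inner loop's body and redo parts into the outer loop yields the same set of traces, which I would make precise by an induction on the number of loop iterations (equivalently, on trace length), matching each unfolding of the nested loop with a corresponding unfolding of the flattened loop. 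I would also note that the side condition $\lang{P}\neq\{\epsilon\}$ in $\ruleTauLoopB$ is not needed for language preservation itself, since the identity above holds regardless; it is imposed only to guarantee progress, which is treated together with termination.
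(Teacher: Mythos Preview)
Your proposal is correct and, in fact, considerably more thorough than the paper's own treatment: the paper does not give a proof of this lemma at all, but simply asserts it with the remark ``As argued in Section~\ref{sec:motivation} and by the semantics of process trees''. Your plan---first establishing that the language map is a congruence (which the paper takes as implicit, since each operator's semantics depends only on the children's languages), and then checking each rule at the root by unfolding the relevant algebraic law---is exactly the natural way to make that one-line assertion rigorous.

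The only point worth noting is emphasis rather than substance. You flag the loop rules as the ``main obstacle'', but the Kleene-algebra identities you invoke ($X\cdot X^{*}=X^{+}$, $\{\epsilon\}\cup X^{+}=X^{*}$, and the flattening $A\,(B\cup C\,A)^{*} = (A\,(B\,A)^{*})\,(C\,(A\,(B\,A)^{*}))^{*}$ for \ruleAssLoopB{}) are standard and dispatch these cases without a genuine induction on iteration count; so the loop cases are not harder in kind than the others, just slightly longer to write out. Your observation that the side condition $\lang{P}\neq\{\epsilon\}$ in \ruleTauLoopB{} is irrelevant for language preservation and serves only termination is also correct and matches how the paper uses it.
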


	\fakesub{Termination.}
		In order to show termination, our proof strategy is to give a function $\varphi : \text{\textit{tree}} \rightarrow \mathbb{N}^+$ that decreases with every rule application.
		That is, before the application of any rule, the function applied to the tree has a higher value than after the application.
		As discussed earlier in this section, some process tree reduction rules increase the number of nodes in a process tree.
		Therefore, a function that simply counts the number of nodes in a tree would not suffice and a more elaborate function is necessary.
		Instead, we use the following helper functions, all of which take a process tree $M$, and let $M'$ be any subtree of $M$:\\
		\begin{tabular}{lp{11cm}}
			$N$ & The number of \underline{n}odes: $N = |\{ M' \in M \}|$. \\
			$C_{\op}$ & number of times a node with operator $\op$ has a direct or indirect $\tau$ \underline{c}hild: $ C_{\op} = |\{ (M'_p, M'_c) \in M \mid M'_c = \tau \land M'_c \text{ is a direct or indirect child of } M'_p \}| $.\\
			$LBE$ & The number of $\loopOp$ nodes such that the \underline{l}oop \underline{b}ody can produce the \underline{e}mpty trace: $LBE = |\{ M' \in M \mid M' = \loopOp(M'_1, \ldots) \land \epsilon \in \lang{M'_1} \}|$.\\
			$O_{\op}$ & The number of $\op$ \underline{o}perator nodes: $ O_{\op} = |\{ M' \in M \mid M' = \op(\ldots)\}|$.\\
			$P_{\op}$ & The number of nodes that have a $\op$ node as \underline{p}arent: $ P_{\op} = |\{ M' \mid \op(\ldots, M', \ldots) \in M \}|$.\\
		\end{tabular}
		
		For instance:
		\begin{align*}
			N(\xorOp(\sequenceOp(a, b), c)) ={}& 5\\
			C_{\orOp}(\orOp(\orOp(\tau, \tau))) ={}& 4\\
			LBE(\loopOp(\loopOp(\loopOp(\tau, a), b), c)) ={}& 3\\
			O_{\interleavedOp}(\interleavedOp(\interleavedOp(a, b), c)) ={}& 2\\
			P_{\concurrentOp}(\concurrentOp(\concurrentOp(a, b), c)) ={}& 4
		\end{align*}
           
		Using these helper functions, we can choose our function $\varphi$: let $k$ be a constant and $M$ be a process tree, then
		$\varphi(k, M) = N(M) + C_{\orOp}(M) k + LBE(M) k^3 + O_{\interleavedOp}(M) k^4 + P_{\concurrentOp}(M) k^5$.
		Then, we can show termination:
	
		\begin{lemma}
			\label{lem:fundamentals:reductionRulesTerminating}
			Repeated application of the reduction rules is terminating.
		\end{lemma}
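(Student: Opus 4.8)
The plan is to use the measure $\varphi$ exactly as set up. Since $\varphi(k,\cdot)$ returns a strictly positive natural number for any positive integer $k$ (note $N(M)\ge 1$ for every tree and all remaining summands are non-negative, so $\varphi\ge 1$), it suffices to show that \emph{every} rule application strictly decreases $\varphi(k,M)$ for a suitably chosen $k$. As $(\mathbb{N}^+,<)$ is well-founded, no infinite chain of rewrites can then exist, which is exactly termination. So the whole argument reduces to a rule-by-rule check that $\varphi$ drops by at least one on each rewrite.

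First I would dispatch the size-reducing rules, which form the bulk: singularity $\ruleSin$, all associativity rules, the non-behaviour-adding $\tau$-rules $\ruleTauSeq, \ruleTauCon, \ruleTauInt, \ruleTauLoopBR$, and the double-skip rules $\ruleTauXor, \ruleTauLoopR$. For each of these $N$ strictly decreases (a node is deleted, or two are merged), so it is enough to verify that none of the weighted counts $C_{\orOp}$, $LBE$, $O_{\interleavedOp}$, $P_{\concurrentOp}$ \emph{increases}. This is routine: these rules create no new $\orOp$, $\interleavedOp$ or $\concurrentOp$ node and add no $\tau$-leaf, and since the rules are language-preserving (correctness lemma) the empty-trace status of every retained loop body, hence $LBE$, is unchanged. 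Deleting a node can only remove $(\op,\tau)$-pairs or $\concurrentOp$-children, so each weighted count is non-increasing and $\varphi$ strictly decreases.

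The crux lies in the \emph{size-increasing} rules $\ruleTauOr$, $\ruleTauOrXor$, $\ruleTauLoopB$ and $\ruleConOr$, together with the node-count-neutral but parentage-changing $\ruleConInt$. Each is designed so that, although $N$ (and possibly a low-weighted count) grows, one \emph{dominant} count strictly drops: $C_{\orOp}$ falls by one for $\ruleTauOr$ and $\ruleTauOrXor$ (the pulled-up $\tau$ ceases to be a descendant of the $\orOp$ node while all ancestor relationships survive); $P_{\concurrentOp}$ falls by one for $\ruleConOr$ (the children $Q_1,Q_2$ are replaced by the single $\concurrentOp$-child $\orOp(Q_1,Q_2)$); $O_{\interleavedOp}$ falls by one for $\ruleConInt$. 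The role of the powers of $k$ is precisely to make this dominant drop outweigh every simultaneous increase, which I would phrase as a \emph{lexicographic} decrease of the tuple of counts ordered by weight, i.e.\ fixing $k$ large enough relative to the increases incurred so that, for example, the unit decrease of $P_{\concurrentOp}$ dominates the $+1$ in $N$ and the possible rise in $C_{\orOp}$ from the new $\tau$-under-$\orOp$ pairs created by $\ruleConOr$.

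The \textbf{main obstacle}, and where the care must go, is making this dominance watertight in every sub-case of the size-increasing rules. Two points are delicate. First, the weight hierarchy between $O_{\interleavedOp}$ and $P_{\concurrentOp}$ must be arranged so that $\ruleConInt$ — which removes one $\interleavedOp$ node but turns its $n$ children into $\concurrentOp$-children, raising $P_{\concurrentOp}$ by $n$ — still decreases $\varphi$; this forces $O_{\interleavedOp}$ to dominate $P_{\concurrentOp}$, and one must additionally observe that no rule ever introduces an $\interleavedOp$ node or enlarges the arity of one, so the $k$ needed to beat $n$ stays bounded by the input tree. Second, for $\ruleTauLoopB$ one must show the empty-body loop count $LBE$ strictly drops, which depends on the \emph{language} of the rebuilt body $\xorOp(\ldots,P)$: the clean case is when this body cannot produce the empty trace (the loop leaves the $LBE$-set), but the case where it still can is genuinely subtle, since there $LBE$ is unchanged while $N$ grows, and there I would need to exhibit a further decreasing quantity or lean on a preceding normal-form assumption. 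Settling this last case, rather than the bookkeeping for the remaining rules, is the real heart of the proof.
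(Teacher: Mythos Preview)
Your approach matches the paper's exactly: the same measure $\varphi$ with the same helper functions, and a rule-by-rule verification that $\varphi$ drops. The paper's displayed proof is considerably terser than yours --- it treats only $\ruleTauOr$, $\ruleTauOrXor$, $\ruleTauLoopB$ and $\ruleConOr$ explicitly (one line each) and disposes of every remaining rule, $\ruleConInt$ included, with the single parenthetical ``(the other rules decrease all helper functions)''.

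The two delicate points you isolate are real, and the paper does not argue them out. For $\ruleConInt$ you correctly note that $P_{\concurrentOp}$ rises by $n$ while only $O_{\interleavedOp}$ drops; with the paper's stated weights ($P_{\concurrentOp}$ at $k^5$ dominating $O_{\interleavedOp}$ at $k^4$) the measure would in fact go up, so your insistence that $O_{\interleavedOp}$ must dominate $P_{\concurrentOp}$ is a genuine correction to the weight hierarchy as printed, and it is harmless for the remaining rules. For $\ruleTauLoopB$ the paper simply asserts ``increases $N$ by 3 and decreases $LBE$ by 1'' without treating the case you flag where the rebuilt body $\xorOp(\ldots,P)$ still contains $\epsilon$; you are right that the measure as written does not obviously cover it, and the paper offers nothing further there. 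In short: same strategy, but your proposal is strictly more careful on exactly the two points that matter.
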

		\begin{proof}
			Let $M$ be a process tree, and let $k$ be sufficiently large.
			First, we show that $\varphi(k, M)$ decreases with each rule application:
			\begin{itemize}
				\item $\ruleTauOr{}$: this rule increases the number of nodes ($N$) by 1.
					However, it decreases the appearance of $\tau$s under $\orOp$-nodes ($C_{\orOp}$) by 1.
					Then, by construction of $\varphi$, for sufficiently large $k$, $\ruleTauOr{}$ decreases $\varphi$.
				\item $\ruleTauOrXor{}$: similarly, this rule increases $N$ by 1 and decreases $C_{\orOp}$ by 1.
				\item $\ruleTauLoopB{}$: similarly, this rule increases $N$ by 3 and decreases $LBE$ by 1.
				\item $\ruleConOr{}$: similarly, this rule increases $N$ by 1, increases $C_{\orOp}$ by at most $N$ and decreases $P_{\concurrentOp}$ by 1.
				As $C_{\orOp} \leq N^2$, this rule decreases $\varphi$.
			\end{itemize}
			(The other rules decrease all helper functions).
			Therefore, for $k \geq N(M)$, $\varphi(k, M)$ decreases with every rule application.
			Hence, applying the reduction rules repeatedly is terminating.\qed
		\end{proof}
		
	\fakesub{Confluence.}
		If a set of reduction rules is confluent, then applying the rules in a repeated fashion always reduces a particular tree to the same end result.
		Consequently, it does not matter what rules are applied or in which order the rules are applied: the end result will always be the same.
		A weaker notion is \emph{local confluence}: a set of rules is locally confluent if for every situation where two rules can be applied to a process tree and the possibilities deviate, it is possible to reduce these deviating possibilities back to a single common process tree~\cite{newman1942theories}.
		Our proof strategy is to show that the set of rules presented in this paper is locally confluent, after which we can conclude confluence.
		
		\begin{lemma}
		\label{lem:reductionRulesLocallyConfluent}
			The reduction rules are locally confluent.
		\end{lemma}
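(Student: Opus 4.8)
The plan is to establish local confluence by a \emph{critical-pair analysis}: whenever two distinct rules apply to a tree $M$, yielding $M_1$ and $M_2$, I exhibit a common tree $M_3$ reachable from both by further rule applications. Since each rule matches a pattern rooted at a single node together with side conditions on some children, two applicable rules either act at \emph{independent} positions --- neither rewrite falls inside the pattern matched by the other --- or their patterns \emph{overlap} at a shared node. The first situation is handled uniformly; the second gives only finitely many critical pairs, which I check individually. Combined with termination (Lemma~\ref{lem:fundamentals:reductionRulesTerminating}), local confluence then yields full confluence by Newman's lemma.

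For the independent case I would argue that the two rewrites commute. The key observation is that \emph{no rule duplicates a subtree}: every right-hand side is obtained by deleting, relocating, or re-labelling the nodes of the left-hand side, never by copying a child. Hence, if one rule rewrites the apex while the other rewrites strictly inside a child subtree that the first merely relocates, that child subtree survives (up to position) and both orders of application produce the same tree. A second observation makes this rigorous in the presence of the semantic side conditions ($\epsilon \in \lang{Q}$, $\lang{P} \neq \{\epsilon\}$, and the short-trace condition of $\ruleConInt{}$): because every rule is correct and these conditions are language-determined, they are invariant under rewriting inside a child, so neither rule's applicability at the apex is destroyed by the other. The purely syntactic conditions of the associativity and singularity rules depend only on a child's \emph{root} operator, which an independent rewrite below that root leaves unchanged. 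This disposes of all configurations except those where the two patterns genuinely share a node.

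The remaining work is the overlapping critical pairs, which I would enumerate by the operator at the shared apex. For each of $\xorOp$, $\sequenceOp$, $\concurrentOp$, $\interleavedOp$, $\orOp$ and $\loopOp$, I list the rules that can fire at or just below such a node, pair them up, use the side conditions to prune impossible combinations, and for each surviving pair give a joining reduction. Many cases are immediate: for example, two singularity redexes, or a singularity redex against an associativity rule on the same operator, both collapse to the same flattened tree.

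The main obstacle I expect comes from the node-increasing rules $\ruleTauOr{}$, $\ruleTauOrXor{}$, $\ruleTauLoopB{}$ and $\ruleConOr{}$, together with the cluster of loop rules. When one of these fires it rewrites the tree into a structurally quite different shape (a pulled-up $\xorOp(\tau, \cdot)$, or a restructured loop), so joining it with a competing rewrite typically requires a \emph{multi-step} reduction rather than a single step --- exactly the kind of chains illustrated in Section~\ref{sec:motivation}. The densest overlaps are at a $\loopOp$ apex, where $\ruleAssLoopB{}$, $\ruleAssLoopR{}$, $\ruleTauLoopR{}$, $\ruleTauLoopB{}$ and $\ruleTauLoopBR{}$ can compete, and at an $\orOp$ apex, where $\ruleAssOr{}$, $\ruleTauOr{}$, $\ruleTauOrXor{}$ and $\ruleConOr{}$ interact; this is where the verification concentrates. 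Throughout, I lean on the fact that local confluence demands only joinability by \emph{arbitrary} reductions: since the system already terminates, it suffices to drive both divergent trees towards a common reduced form, which gives enough freedom to absorb the temporary growth these rules introduce.
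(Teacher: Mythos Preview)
Your proposal is correct and follows the same approach as the paper: a critical-pair analysis that dismisses non-overlapping redexes as trivially joinable and then checks the finitely many overlapping pairs individually. The paper's proof carries out the enumeration concretely (listing nine overlapping pairs together with the joining reductions), whereas you supply a cleaner justification of why the non-overlapping case is harmless --- via left-linearity (no subtree duplication) and language-invariance of the semantic side conditions --- which the paper simply calls ``trivial''; otherwise the arguments coincide.
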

		We prove this lemma by considering all pairs of rules that might overlap (that is, apply to the same nodes), and showing that a common model can still be obtained.
		\begin{proof}
			We show the local confluence property for each pair of rules, excluding the trivial cases where the left-hand sides of the rules do not overlap.
			\begin{itemize}
				\item \ruleSin{} overlaps with \ruleTauOrXor{} if $\ldots_1$ in \ruleTauOrXor{} is empty.
					After applying \ruleTauOrXor{}, a common model can be reached by applying \ruleSin{} and \ruleAssXor{}.
					A similar argument holds for \ruleSin{} and \ruleConInt{}, which overlap if $n=1$.
				\item \ruleAssXor{} overlaps with \ruleAssLoopR{} if $\ldots_2$ contains an $\xorOp$ node.
					After applying \ruleAssLoopR{}, a common model is reached with applying \ruleAssLoopR{} another time.
				\item \ruleAssOr{} overlaps with \ruleTauOrXor{} if $\ldots_1$ in the left hand side of \ruleAssOr{} is the left hand side of \ruleTauOrXor{} (or symmetrically).
					After applying \ruleAssOr{}, a common model is reached with applying \ruleTauXor{}, \ruleSin{} and \ruleTauOrXor{} repeatedly.
					A similar argument holds for \ruleTauOr{}.
				\item \ruleAssLoopR{} overlaps with \ruleTauLoopB{} if $\ldots$ of the latter contains an $\xorOp$ child.
					After applying \ruleTauLoopB{}, a common model is reached by applying \ruleAssXor{} (on that tree) or \ruleAssLoopR{} and \ruleTauLoopB{} (on the original tree).
				\item \ruleTauXor{} overlaps with \ruleTauOrXor{} if $\ldots_2$ of the latter contains a $Q$-subtree.
					After applying \ruleTauXor{}, a common model is reached by applying \ruleTauOrXor{}, \ruleTauXor{} and \ruleSin{}.
				\item \ruleTauCon{} overlaps with \ruleConOr{} if one of the $Q$'s is a $\tau$.
					Applying \ruleTauCon{} and applying \ruleConOr{}, \ruleTauOr{} and \ruleSin{} yields the same model.
				\item \ruleTauInt{} overlaps with \ruleConInt{}, but applying \ruleTauCon{} yields a common model in a single step.
				\item \ruleTauLoopB{} overlaps with \ruleTauLoopR{}, but \ruleTauXor{} yields a common model in a single step.
				\item \ruleConOr{} overlaps with itself, but \ruleAssOr{} yields a common model.
			\end{itemize}
			Hence, the set of rules is locally confluent.\qed
		\end{proof}
		
		Newman's Lemma~\cite{newman1942theories} states that a set of reduction rules (a system of rewriting rules) that is locally confluent and terminating is confluent, thus our set is confluent.

		\begin{corollary}
		\label{cor:canonical}
			By Lemma~\ref{lem:fundamentals:reductionRulesTerminating}, Lemma~\ref{lem:reductionRulesLocallyConfluent} and Newman's Lemma~\cite{newman1942theories}, the reduction rules are confluent.
		\end{corollary}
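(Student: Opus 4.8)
The plan is to invoke Newman's Lemma~\cite{newman1942theories} directly. Newman's Lemma states that an abstract rewriting system that is both terminating and locally confluent is (globally) confluent. Both hypotheses are already in hand: Lemma~\ref{lem:fundamentals:reductionRulesTerminating} supplies termination (via the strictly decreasing measure $\varphi$), and Lemma~\ref{lem:reductionRulesLocallyConfluent} supplies local confluence. Hence the corollary follows immediately, and the substance of the argument is merely the citation together with a pointer to the two lemmas.

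For completeness I would recall why Newman's Lemma ties these two facts together, since the mechanism reuses both ingredients in exactly the expected way. The argument is a well-founded induction on $\varphi$. Given a tree $M$ with two reductions $M \Rightarrow^* N_1$ and $M \Rightarrow^* N_2$, if either is empty the claim is trivial; otherwise I take the first step of each, $M \Rightarrow M_1$ and $M \Rightarrow M_2$. Local confluence closes this immediate fork with a tree $P$ reachable from both $M_1$ and $M_2$. Because $\varphi$ strictly decreases at every step, $M_1$ and $M_2$ have strictly smaller measure than $M$, so the induction hypothesis joins the two descendants of $M_1$ (namely $N_1$ and $P$) and likewise the two descendants of $M_2$, producing a common reduct of $N_1$ and $N_2$.

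The single point requiring care is that the paper's notion of confluence is stated up to isomorphism: two exhaustive reductions must yield isomorphic, not necessarily identical, trees. I would therefore take the objects of the rewriting system to be process trees modulo the structural isomorphism implicit in the commutative operators, and confirm that Lemma~\ref{lem:reductionRulesLocallyConfluent} was established at this level (its joining rule sequences recover the common model only after reordering the children of commutative nodes). Since no argument beyond the two cited lemmas and the standard statement of Newman's Lemma is needed, there is no substantive obstacle here; the real work was already done in proving termination and, above all, in the pairwise case analysis for local confluence.
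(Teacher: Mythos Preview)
Your proposal is correct and matches the paper's approach exactly: the corollary is stated as an immediate consequence of Lemmas~\ref{lem:fundamentals:reductionRulesTerminating} and~\ref{lem:reductionRulesLocallyConfluent} together with Newman's Lemma, with no further argument given. Your additional unpacking of the well-founded induction behind Newman's Lemma and the remark about working modulo isomorphism go beyond what the paper supplies, but they are consistent with it and do not change the approach.
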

		
		As a practical consequence, if there are several possibilities of applying a rule to a particular process tree, it does not matter which possibililty is chosen or in which order the possibilities are applied: the tree resulting of an exhaustive repeated application of the rules will always be the same.
		
	\fakesub{Completeness.}
		A set of reduction rules is complete with respect to a certain class $C$ of process models if any two language-equivalent models from $C$ are isomorphic after exhaustive application of the rules.
		For our rules, this class $C$ includes the constructs $\tau$, $\orOp$ and $\interleavedOp$, but excludes certain nestings of $\interleavedOp$ and $\loopOp$ nodes with $\tau$ children, and multiple occurrences of activities.
		For a full specification of $C$, please refer to~\appRef{app:completeness}.
		
		\begin{lemma}
		\label{lem:completeness}
			The rules are complete with respect to $C$.
			That is: let $P_1, P_2 \in C$ be process trees such that $\lang{P_1} = \lang{P_2}$ to which the rules have been applied exhaustively.
			Then, $P_1 = P_2$.
		\end{lemma}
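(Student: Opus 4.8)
The plan is to exploit Corollary~\ref{cor:canonical}: since the rules are confluent and terminating, every process tree has a unique normal form, so it suffices to prove that for two irreducible (normal-form) trees $P_1, P_2 \in C$, equality of language implies syntactic equality. I would proceed by structural induction, strengthening the statement to the claim that, among normal-form trees in $C$, the language $\lang{P}$ determines $P$ uniquely. The base cases are the leaves: a normal form with language $\{\langle a\rangle\}$ must be the leaf $a$ (any operator node with that language would be a singularity and hence reducible by \ruleSin{}), and the only normal form with language $\{\epsilon\}$ is $\tau$.

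For the inductive step I would first show that the language pins down the root operator, and then that it pins down the children. The crucial simplification is that $C$ forbids repeated activities, so every activity of $\Sigma$ occurs in at most one leaf; consequently the alphabets of the children of any node are pairwise disjoint. This disjointness makes the language operators effectively invertible: projecting each trace onto a child's alphabet recovers that child's language for $\concurrentOp$ (shuffle) and $\interleavedOp$, the relative order of the disjoint alphabets within the traces recovers the factorisation for $\sequenceOp$, and for $\xorOp$ each trace is assigned to the unique child over whose alphabet it ranges. Having recovered the children's languages, I would match corresponding children of $P_1$ and $P_2$ by their (disjoint) alphabets and invoke the induction hypothesis on each pair, using the commutativity of $\xorOp$, $\concurrentOp$ and $\orOp$ to treat their children as multisets and the ordering data for $\sequenceOp$, $\interleavedOp$ and $\loopOp$.

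To identify the root operator from $\lang{P}$, I would read off behavioural signatures that the normal-form and $C$ constraints render unambiguous: whether $\epsilon\in\lang{P}$ and where the skip sits (the pulling-up rules $\ruleTauOr{}$, $\ruleTauOrXor{}$ and $\ruleTauLoopB{}$ guarantee that a top-level optionality appears as $\xorOp(\tau,\cdot)$ rather than hidden inside an $\orOp$ or a loop body); whether traces can be of unbounded length (only $\loopOp$ produces this); whether every child must occur (distinguishing $\concurrentOp$ and $\interleavedOp$ from $\xorOp$ and $\orOp$); and whether simultaneous overlap of children is possible (distinguishing $\concurrentOp$ from $\interleavedOp$). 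Here the rules are essential: $\ruleConInt{}$ removes any $\interleavedOp$ whose children only emit traces of length $\le 1$, so a surviving $\interleavedOp$ has a child emitting a trace of length $\ge 2$ and hence a witness trace on which interleaving and concurrency differ, while $\ruleConOr{}$ together with the $\tau$-rules normalises the $\concurrentOp$/$\orOp$ boundary.

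I expect the main obstacle to be precisely this operator-identification step for the interacting cases $\orOp$, $\interleavedOp$, $\concurrentOp$ and $\loopOp$ in the presence of $\tau$: one must verify, case by case, that no two distinct normal-form operators admissible in $C$ can produce the same language. This is exactly where the exclusions built into $C$ (the forbidden nestings of $\interleavedOp$ and $\loopOp$ with $\tau$ children, and the single-occurrence restriction on activities) are indispensable, and where a careless decomposition of the language into child sublanguages would break down.
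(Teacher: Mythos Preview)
Your plan is sound, but it diverges sharply from what the paper actually does. The paper's proof is a two-line contrapositive that offloads all the work to an external result: it invokes Lemma~50 of~\cite{kais}, which asserts that for any two distinct normal-form trees in $C$ there exists a \emph{language abstraction} $\alpha$ with $\alpha(\lang{P_1})\neq\alpha(\lang{P_2})$, whence $\lang{P_1}\neq\lang{P_2}$. So the paper does not carry out any induction here at all; the case analysis you outline lives entirely in the cited reference, and is phrased there in terms of separating invariants (directly-follows relations, start/end activities, emptiness, and similar footprint-style abstractions) rather than an explicit reconstruction of the tree from its language.

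Your approach is the natural self-contained alternative: show directly, by structural induction, that $\lang{\cdot}$ is injective on normal-form trees in $C$. The ingredients you name are the right ones --- the disjoint-alphabet constraint in $C$ is exactly what makes child languages recoverable by projection, and the reduction rules (\ruleSin{}, the associativity rules, \ruleConInt{}, \ruleConOr{}, and the $\tau$-pulling-up rules) are what make the root operator unambiguous. You are also right that the operator-identification step is where the effort concentrates; note in particular that the side conditions~$C$.i.1--i.4 on $\interleavedOp$ and~$C$.l.1--l.2 on $\loopOp$ in Appendix~\ref{app:completeness} are doing real work there, and you will need them explicitly (for example, $C$.i.1 supplies the witness trace that separates a surviving $\interleavedOp$ from $\concurrentOp$, and~$C$.l.2 is what prevents a $\loopOp$ with a skippable redo child from masquerading as a different loop). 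One small caveat: matching children ``by their alphabets'' presupposes that the language determines the partition of $\Sigma$ into child alphabets, which for $\xorOp$ and $\orOp$ requires an argument (transitive closure of co-occurrence, together with the fact that in normal form a child of $\xorOp$ is never itself an $\xorOp$); you should make that step explicit rather than assume it. The upside of your route is that it is self-contained; the paper's route is shorter on the page but opaque without access to~\cite{kais}.
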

		\begin{proof}
			Towards contradiction, 	assume that $P_1 \neq P_2$.
			By Lemma~50 of~\cite{kais}, there is a language abstraction $\alpha$ such that $\alpha(\lang{P_1}) \neq \alpha(\lang{P_2})$.
			Hence, $\lang{P_1} \neq \lang{P_2}$.
			\qed
		\end{proof}
	
		Outside the class $C$, our rules might not be strong enough to reduce a pair of language-equivalent trees to the same normal form.
		For instance, to each of the following pairs of language equivalent trees, no reduction rules can be applied, but still they are not isomorphic:
		\begin{align*}
                \loopOp(\sequenceOp(\xorOp(\tau, a), \loopOp(b, \tau)), \tau) \xLeftrightarrow{\text{future work}}{} & \loopOp(\sequenceOp(\xorOp(\tau, a), b), \tau)\\
                \concurrentOp(a, a) \xLeftrightarrow{\text{future work}}{}& \sequenceOp(a, a)\\
                \xorOp(a, a) \xLeftrightarrow{\text{future work}}{}& a\\
                \xorOp(\sequenceOp(a, b), \sequenceOp(b, a)) \xLeftrightarrow{\text{future work}}{}& \concurrentOp(a, b)
		\end{align*}
		
		In the first pair of trees, the challenge is that the inner loop has a $\tau$ as a non-first child and no reduction rule applies.
		For this example, a new rule would have to ``look'' several layers of process tree nodes deep, making it costly to execute.
		In the other pairs, activities appear more than once, which is not targeted by the rules of this paper. 
		In order to define reduction rules for these examples, one would need to perform a behavioural analysis of the process tree to find possible reductions, again making them costly to execute compared to the structural analysis based rules presented in this paper.
		
	\fakesub{Petri Net Rules.}
		To illustrate the difference in scope between the Petri net rules of~\cite{DBLP:conf/bpm/PedroC16} and the rules of this paper, we consider our process tree \inlineTree{$\xorOp$}{[.$\concurrentOp$ [.$\interleavedOp$ $a$ $b$ ] $c$ ] [.$\sequenceOp$ $d$ $e$ ] [.$\loopOp$ $f$ $g$ ] } and the translated workflow net shown in Figure~\ref{fig:workflow} again.		
		In isolation, the rules of~\cite{DBLP:conf/bpm/PedroC16} reduce this workflow net to the net shown in Figure~\ref{fig:workflow2}.
		In this net, the place $p_1$ could be removed without changing the language of the net, however there is no rule in~\cite{DBLP:conf/bpm/PedroC16} that targets this place.
		Such a redundant \emph{milestone} place makes the net non-free choice and is challenging for any Petri net reduction rule, as this redundancy is hard to establish in a structural rule.
		Thus, a state space exploration might be necessary to identify the redundancy of $p_1$, rather than a simple structured rule.		
		If instead we reduce the process tree using the reduction rules of this paper, we obtain \inlineTree{$\xorOp$}{[.$\concurrentOp$ $a$ $b$ $c$ ] [.$\sequenceOp$ $d$ $e$ ] [.$\loopOp$ $f$ $g$ ] } using reduction rules \ruleConInt{} and \ruleAssCon{}.
		The translation of this latter process tree yields a workflow net identical to Figure~\ref{fig:workflow2}, but without place $p_1$.
		Thus, as the language relation between activities is clear in trees, complex constructs (e.g. $\interleavedOp$ and $\orOp$) are reducible using a structural reduction rule.
			
		\begin{figure}
			\centering
			\begin{tikzpicture}
		[place/.style={draw, circle}, 
		transition/.style={draw, minimum height=0.5cm},
		silent transition/.style={transition, fill},
		treeNode/.style={draw, dashed}]
		
		\node (source) [place] {};
		
		\def\firstone{7mm}
		\def\secondone{7mm};
		\def\milestoneplace{5mm};
		\def\dist{3.3mm};
		\node (int1) [silent transition, right of=source] {};
		
		\node (int12) [place, right of=int1, yshift=\secondone] {};
		\node (int112) [place, right of=int1, yshift=-\secondone] {};
		\node (a) [transition, right of=int12, label=above right:$a$] {};
		\node (int16) [place, right of=a] {};
		
		\node (b) [transition, right of=int112, label=above right:$b$] {};
		\node (int23) [place, right of=b] {};
		\node (milestone) [place, label=above right:$p_1$] at ($(a)!0.5!(b)$) {};
		
		\node (c) [transition, below=2mm of b, label=above right:$c$] {};		
		\node (int4) [place, left of=c] {};
		\node (int6) [place, right of=c] {};
		
		\node (int9) [silent transition, right of=int16, yshift=-\firstone] {};
		
		\draw [->] (source) to (int1);
		\draw [->] (int1) |- (int4);
		\draw [->] (int4) to (c);
		\draw [->] (c) to (int6);
		\draw [->] (int6) -| (int9);
		\draw [->] (int1) to (int12);
		\draw [->] (int12) to (a);
		\draw [->] (a) to (int16);
		\draw [->] (int16) to (int9);
		\draw [->] (int112) to (b);
		\draw [->] (int1) to (int112);
		\draw [->] (int1) to (milestone);
		\draw [->, bend left] (milestone) to (a);
		\draw [->, bend left] (milestone) to (b);
		\draw [->, bend left] (a) to (milestone);
		\draw [->, bend left] (b) to (milestone);
		\draw [->] (b) to (int23);
		\draw [->] (int23) to (int9);
		\draw [->] (milestone) to (int9);
		
		\node (seq2) [place, above=2mm of a] {};
		\node (seq1) [transition, left of=seq2, label=above right:$d$] {};
		\node (seq3) [transition, right of=seq2, label=above right:$e$] {};
		
		\draw [->] (source) |- (seq1);
		\draw [->] (seq1) to (seq2);
		\draw [->] (seq2) to (seq3);
		
		\node (loop3) [transition, below=2mm of c, label=above right:$f$] {};
		\node (loop2) [place, left of=loop3] {};
		\node (loop1) [silent transition, left of=loop2] {};		
		\node (loop4) [place, right of=loop3] {};
		\node (loop5) [transition, below=2mm of loop3, label=above right:$g$] {};
		\node (loop6) [silent transition, right of=loop4] {};

		\draw [->] (source) |- (loop1);		
		\draw [->] (loop1) to (loop2);
		\draw [->] (loop2) to (loop3);
		\draw [->] (loop3) to (loop4);
		\draw [->] (loop4) |- (loop5);
		\draw [->] (loop5) -| (loop2);
		\draw [->] (loop4) to (loop6);
		
		\node (sink) [place, right of=int9] {};
		\draw [->] (loop6) -| (sink);
		\draw [->] (seq3) -| (sink);
		\draw [->] (int9) to (sink);
		
	\end{tikzpicture}
			\caption{The workflow net of Figure~\ref{fig:workflow}, reduced using~\cite{DBLP:conf/bpm/PedroC16}.
				Using the rules of this paper, place $p_1$ can be avoided.}
			\label{fig:workflow2}
		\end{figure}
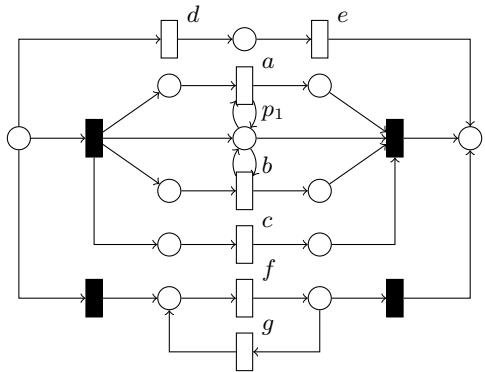
		
	\fakesub{Implementation.}
		The reduction rules introduced in this paper have been implemented as a plug-in of the ProM framework~\cite{DBLP:conf/apn/DongenMVWA05} and are extensible.
		The plug-in uses a brute-force approach on rules and nodes until no rule can be applied anymore.
		An interesting area of further research would be to study strategies to optimise rule applications, even though in our experiments reduction time was negligible (less than a second).
	
	\section{Evaluation}
	\label{sec:evaluation}
		In this section, we illustrate the practical applicability of the reduction rules using a real-life experiment, mimicking the setting of a user applying a process discovery technique and aiming for a model that is as small as possible (for human understanding and further machine processing).
That is, we show the size reduction achieved by the Petri net reduction rules of~\cite{DBLP:conf/bpm/PedroC16} (PN), the block-structured-Petri-net reduction rules presented in this paper (PT), and the combination of the two sets of rules (PT\&PN).
	
To this end, we apply three process discovery techniques that return process trees to 16 publicly available real-life event logs.
As two of the discovery techniques are non-deterministic, the techniques are applied 5 times to each event log.
For each such obtained process tree, four sets of size measures are taken: the returned model without reduction, reduced with PT, reduced with PN, and reduced with PT\&PN.
For each of these sets, if applicable, the size of the process tree (that is, the number of nodes in the tree) and the size of the Petri net (that is, the total number of nodes, transitions and edges) are reported.
	
The discovery techniques that were used were the deterministic Inductive Miner - infrequent (IMf)~\cite{DBLP:conf/bpm/LeemansFA13}, the genetic Evolutionary Tree Miner (ETM)~\cite{DBLP:conf/cec/BuijsDA12} and the Indulpet Miner (IN)~\cite{DBLP:conf/otm/LeemansTH18}\footnote{The implementations of IMf and IN apply a subset of the rules in this paper by default. ETM applies some associativity rules. For this experiment, all these reduction steps were disabled to show the impact of the reduction rules.}
We provided a time limit of one hour to the techniques and applied them to the following event logs: a log of a mortgage-application process (BPIC12), 5 logs of building-permit application processes in different municipalities (BPIC15), 
8 logs of agricultural-subsidy applications (BPIC18), a log of a road fine collection process (Roadfines), and a log of a sepsis treatment process in a hospital (Sepsis)\footnote{All datasets are available from \url{https://data.4tu.nl/repository/collection:event_logs_real}.}.

\begin{table}
	\caption{Results: size of resulting trees (PT) and Petri nets (PN).}
	\label{tbl:results}
	\centering
	\begin{tabular}{llrrrrrr}
\toprule
&&\multicolumn{2}{c}{not reduced}&\multicolumn{2}{c}{PT}&\multicolumn{1}{c}{PN}&\multicolumn{1}{c}{PT\&PN}\\
\cmidrule(lr){3-4}
\cmidrule(lr){5-6}
\cmidrule(lr){7-7}
\cmidrule(lr){8-8}

&&tree&pn&tree&pn&pn&pn\\
\midrule

BPIC12 & IMf&73&194&67&191&174&171\\
& IN&26&72&22&67&57&58\\

BPIC15-1 & IMf&998&2883&930&2635&2718&2579\\
&IN&14&45&13&45&45&45\\

BPIC15-2&IMf&615&1760&556&1588&1645&1548\\
&IN&45&133&40&133&129&129\\

BPIC15-3&IMf&712&1985&646&1809&1874&1761\\
&IN&23&58&21&58&52&52\\

BPIC15-4&IMf&595&1608&545&1520&1487&1431\\
&IN&124&379&115&379&352&352\\

BPIC15-5&IMf&649&1901&585&1682&1758&1598\\
&IN&91&292&84&289&277&274\\

Roadfines&IMf&34&102&31&94&98&94\\
&ETM&18&99&17&87&91&85\\
&IN&34&102&31&94&98&94\\

BPIC18-1&IMf&19&59&18&59&51&51\\
&ETM&38&312&36&285&265&253\\
&IN&19&59&18&59&51&51\\

BPIC18-2&IMf&13&37&12&37&37&37\\
&ETM&41&351&42&341&335&329\\
&IN&13&37&12&37&37&37\\

BPIC18-3&IMf&36&102&34&98&78&78\\
&IN&23&67&20&61&50&50\\

BPIC18-4&IMf&35&132&27&93&101&89\\
&IN&20&62&18&58&49&49\\

BPIC18-5&IMf&54&155&52&155&135&135\\
&IN&56&192&52&189&159&156\\

BPIC18-6&IMf&31&98&27&90&86&82\\
&IN&31&98&27&90&86&82\\

BPIC18-7&IMf&71&216&64&200&168&160\\
&IN&69&260&66&248&196&213\\

BPIC18-8&IMf&18&59&17&59&51&51\\
&IN&18&59&17&59&51&51\\

Sepsis&IMf&36&129&33&129&109&109\\
&IN&36&129&33&129&109&109\\

\bottomrule
\end{tabular}

\end{table}

\paragraph{Results.}
	Table~\ref{tbl:results} shows the results.
	Most models by discovered by IMf are susceptible to reduction: all models except BPIC18-2 were made smaller by the reductions.
	The tree reduction (PT) by itself resulted in the smallest model once, the Petri net reduction (PN) by itself 5 times, while the combined reduction (PT\&PN) yielded the smallest models 9 times.
	The best result was achieved on BPIC18-4, where the PT\&PN-reduced models were 67\% of the size of the non-reduced models.

	ETM did not return models for the majority of the event logs in this experiment within the time we had available.
	For all of the models obtained by ETM, the combination of PT\&PN achieved smaller models than PT or PN in isolation.
	
	Indulpet Miner (IN) combines ETM and IMf and, in general, discovers smaller models than ETM or IMf.
	On these models, the PN reduction by itself was the most effective for 12 logs, while the PT reduction by itself was that in 3 cases.
	The combined PT+PN reduction resulted in the smallest models in 4 cases.
	Remarkably, for two logs (BPIC12 and BPIC18-7), the PN reduction by itself outperformed PT+PN (by 1 and 17 elements).
	A manual inspection revealed that this was due to rule \ruleTauLoopB{}, which illustrates the trade offs necessary between confluence, smaller process trees and smaller Petri nets.
	
	To conclude, for most of the combination of event logs and miners, the combination of PT\&PN outperformed the PT and PN reduction rules applied in isolation.
	
\paragraph{Reproducibility.}
	All event logs used in this experiment are publicly available; the code to execute the experiment itself is available at {\url{https://svn.win.tue.nl/repos/prom/Packages/SanderLeemans/Trunk/src/sosym2020}}.
	The machine on which the experiments were run possesses a Xeon 3.5GHz processor, 32GB of RAM and Windows 7, which was fully patched. 
	
	\section{Conclusion}
	\label{sec:conclusion}
		With the increase of data available in business processes, larger and more complex process models challenge human analysis and machine processing.
Reduction rules decrease the size and complexity of models while they preserve the language of the model, which makes these models easier to read and faster to process.
In this paper, we considered four desirable properties for such reduction rules: correctness, that is the rules preserve the language, termination, confluency, that is the order in which the rules are applied exhaustively is not relevant, and completeness, that is for a certain class of models it holds that two models of the class are language equivalent if and only if their reduced counterparts are syntactically equivalent.

We introduced a set of reduction rules for abstract hierarchical views of block-structured workflow nets (process trees), consisting of a singularity rule, associativity rules, several types of $\tau$-reduction rules and concurrency rules.
This set of rules was shown to be correct, terminating, confluent and complete.

An implementation has been provided, and our evaluation showed that the reduction rules introduced in this paper improve, that is, result in smaller and less complex models, compared to reducing translated Petri nets directly.
In particular, in many cases the best results were obtained using a combination of process tree and Petri net reduction, which illustrates that structural reduction on a higher level of abstraction (process trees) may target different properties of the language of a model than reduction on a lower level of abstraction (Petri nets).

As future work, we would suggest research into the completeness property, that is, for which classes of models it holds that two models have the same language if and only if their exhaustively reduced counterparts are syntactically equivalent.
For the block-structured workflow net rules of this paper, this class could be extended, while for the Petri net rules of~\cite{DBLP:conf/bpm/PedroC16}, such a class could be established.

Furthermore, the reduction rules of this paper and~\cite{DBLP:conf/bpm/PedroC16} could be extended with rules that target multiple occurrences of activities, such as the model $\xorOp(a, a, b)$, preferably with structure-based rules as far as possible.
Finally, other formalisms, such as BPMN models, EPCs and Petri nets, might benefit from a chain of steps consisting of a translation to block-structured workflow nets, then applying the rules of this paper, then translating the resulting model back to the original formalism.

\noindent\textit{Acknowledgement.}
We wish to thank Dirk Fahland and Wil van der Aalst for their input on and discussions about preliminary versions of the reduction rules.

    	
	\bibliographystyle{splncs03}
	\bibliography{representaties0}
	
	\appendix
	
	\section{Function: Does a Process Tree Express the Empty Trace}
	\label{app:treeEmptyTrace}
		
		\begin{align*}
			\epsilon \in \lang{a} ={}& \text{false}\\
			\epsilon \in \lang{\tau} ={}& \text{true}\\
			\epsilon \in \lang{\xorOp(M_1, \ldots M_n)} ={}& \epsilon \in \lang{M_1} \lor \ldots \epsilon \in \lang{M_n}\\
			\epsilon \in \lang{\sequenceOp(M_1, \ldots M_n)} ={}& \epsilon \in \lang{M_1} \land \ldots \epsilon \in \lang{M_n}\\
			\epsilon \in \lang{\interleavedOp(M_1, \ldots M_n)} ={}& \epsilon \in \lang{M_1} \land \ldots \epsilon \in \lang{M_n}\\
			\epsilon \in \lang{\concurrentOp(M_1, \ldots M_n)} ={}& \epsilon \in \lang{M_1} \land \ldots \epsilon \in \lang{M_n}\\
			\epsilon \in \lang{\orOp(M_1, \ldots M_n)} ={}& \epsilon \in \lang{M_1} \lor \ldots \epsilon \in \lang{M_n}\\
			\epsilon \in \lang{\loopOp(M_1, \ldots M_n)} ={}& \epsilon \in \lang{M_1}
		\end{align*}
		
	\section{Function: Maximum Trace Length of Process Tree}
	\label{app:shortTraces}
		\begin{align*}
			m(\tau) ={}& 0\\
			m(a) ={}& 1\\
			m(\xorOp(M_1, \ldots M_n)) ={}& \max(m(M_1), \ldots m(M_n))\\
			m(\sequenceOp(M_1, \ldots M_n)) ={}& m(M_1) + \ldots m(M_n)\\
			m(\interleavedOp(M_1, \ldots M_n)) ={}& m(M_1) + \ldots m(M_n)\\
			m(\concurrentOp(M_1, \ldots M_n)) ={}& m(M_1) + \ldots m(M_n)\\
			m(\orOp(M_1, \ldots M_n)) ={}& m(M_1) + \ldots m(M_n)\\
			m(\loopOp(M_1, \ldots M_n)) ={}& \begin{cases} \infty & \max(m(M_1), \ldots m(M_n)) \geq 1 \\ 0 & \text{otherwise} \end{cases}
		\end{align*}
		
		\section{Class of Process Trees for Which Completeness Holds}
		\label{app:completeness}
		\def\rediscoverableClassCoo{C}		
		Adapted from~\cite{kais}.		
		
		\begin{definition}
			Let $\Sigma$ be an alphabet of activities, then the following process trees are in $\rediscoverableClassCoo{}$:
				\begin{itemize}
					\item $\tau$ is in $\rediscoverableClassCoo{}$;
					\item $a$ with $a \in \Sigma$ is in \rediscoverableClassCoo{};
					\item Let $M_1 \ldots M_n$ be reduced process trees in $\rediscoverableClassCoo{}$ without duplicate activities:
						$\forall_{i \in [1\ldots n], i \neq j \in [1\ldots n]} \Sigma(M_i) \cap \Sigma(M_j) = \emptyset$.
						Then,
						\begin{itemize}
							\item A node $\op(M_1, \ldots M_n)$ with $\op \in \{\xorOp, \sequenceOp, \concurrentOp, \orOp\}$ is in $\rediscoverableClassCoo{}$
							\item An interleaved node $\interleavedOp(M_1, \ldots, M_n)$ is in $\rediscoverableClassCoo{}$ if all:
								\begin{namedEnumerate}{i}{$\rediscoverableClassCoo$.}{}
									\item At least one child has disjoint start and end activities:
										$\exists_{i \in [1\ldots n]} \text{Start}(M_i) \cap \text{End}(M_i) = \emptyset$
									\item No child is interleaved itself:
										$\forall_{i\in[1\ldots n]} M_i \neq \interleavedOp(\ldots)$
									\item \label{req:fundamentals:oi:rediscoverableClassCoo:noOptionalUnderInterleaved} No child is optionally interleaved:
										$\forall_{i \in [1\ldots n]} M_i \neq \xorOp(\tau, \interleavedOp(\ldots)) $
									\item Each concurrent or inclusive choice child has at least one child with disjoint start and end activities:
										$\forall_{i \in [1\ldots n]} M_i = \op(M'_1, \ldots M'_m)  \Rightarrow \exists_{j \in [1 \ldots m]} \text{Start}(M'_j) \cap \text{End}(M'_j) = \emptyset $ with $\op \in \{\concurrentOp, \orOp\}$
								\end{namedEnumerate}
							\item A loop node $\loopOp(M_1, \ldots M_n)$ is in $\rediscoverableClassCoo{}$ if all:
								\begin{namedEnumerate}{l}{$\rediscoverableClassCoo$.}{}
									\item The body child is not concurrent:
										$M_1 \neq \concurrentOp(\ldots)$
									\item \label{req:fundamentals:oi:rediscoverableClassCoo:noEpsilonInLoopBody} No redo child can produce the empty trace:
										$\forall_{i \in [2\ldots n]} \epsilon \notin \lang{M_i}$
								\end{namedEnumerate}
						\end{itemize}
				\end{itemize}
			\end{definition}
				    	
\end{document}